\title{DesNet: Decomposed Scale-Consistent Network for\\Unsupervised Depth Completion}
\author{
    Zhiqiang Yan, Kun Wang, Xiang Li, Zhenyu Zhang, Jun Li\textsuperscript{\Letter}, Jian Yang\textsuperscript{\Letter}
}
\begin{document}

\maketitle

\begin{abstract}
Unsupervised depth completion aims to recover dense depth from the sparse one without using the ground-truth annotation. Although depth measurement obtained from LiDAR is usually sparse, it contains valid and real distance information, \emph{i.e.}, scale-consistent absolute depth values. Meanwhile, scale-agnostic counterparts seek to estimate relative depth and have achieved impressive performance. To leverage both the inherent characteristics, we thus suggest to model scale-consistent depth upon unsupervised scale-agnostic frameworks. Specifically, we propose the \emph{decomposed scale-consistent learning} (DSCL) strategy, which disintegrates the absolute depth into relative depth prediction and global scale estimation, contributing to individual learning benefits. But unfortunately, most existing unsupervised scale-agnostic frameworks heavily suffer from depth holes due to the extremely sparse depth input and weak supervised signal. To tackle this issue, we introduce the \emph{global depth guidance} (GDG) module, which attentively propagates dense depth reference into the sparse target via novel dense-to-sparse attention. Extensive experiments show the superiority of our method on outdoor KITTI benchmark, ranking 1st and outperforming the best KBNet more than $12\%$ in RMSE. In addition, our approach achieves state-of-the-art performance on indoor NYUv2 dataset.
\end{abstract}

\begin{figure}[t]
 \centering
 \includegraphics[width=0.95\columnwidth]{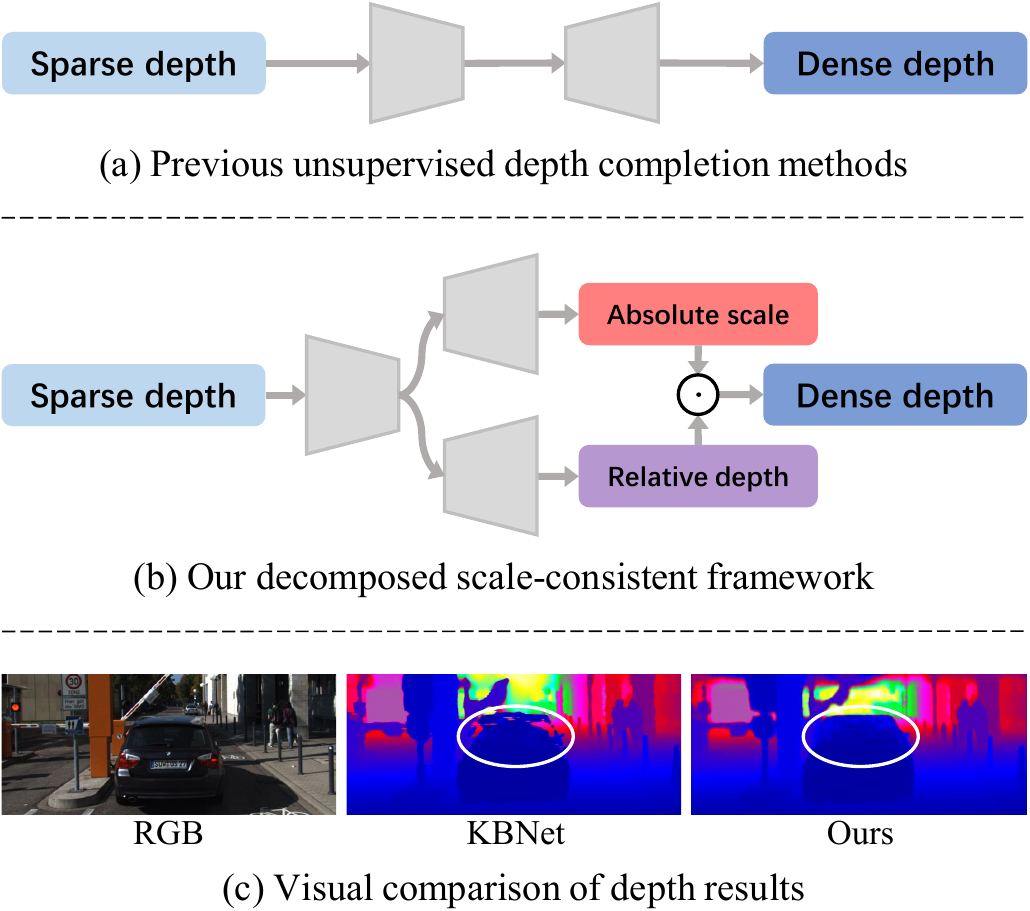}\\
 \caption{(a) Previous methods directly predict the absolute dense depth, whilst (b) our approach decomposes it into relative depth prediction and absolute scale estimation, contributing to (c) not only higher accuracy but denser depth than the excellent KBNet~\cite{wong2021unsupervised}.}\label{comparison_with_KBNet}
\end{figure}

\section{Introduction}
Depth completion, converting sparse depth to the dense one with or without the help of the corresponding image, is an indispensable part of many computer vision applications, \emph{e.g.}, autonomous driving~\cite{godard2019digging,yan2021rignet}, augmented reality~\cite{zhong2019deep,yan2022learning}, and 3D scene reconstruction~\cite{zhang2019pattern,yan2022multi}. In these scenarios, dense ground-truth depth annotations are usually expensive and hard to obtain for supervised depth completion while sparse depth maps can be easily measured by depth sensors (\emph{e.g.}, LiDAR). Hence, plenty of unsupervised approaches~\cite{2020Dense,wong2021unsupervised} have been proposed to reduce the high cost since S2D~\cite{ma2018self} establishes the first unsupervised framework for depth completion. In general, \emph{for one thing}, these methods are supervised by the sparse depth input and photometric reconstruction, which directly output scale-consistent absolute depth upon the real scale information in sparse depth. \emph{For another thing}, unsupervised depth estimation counterparts (only with color images as input) often suffer from scale ambiguity issues~\cite{bian2019unsupervised}, which could only predict relative depth. However, in recent months, the counterparts have shown promising prospect that contributes to high depth accuracy~\cite{petrovai2022exploiting,he_ra_depth}. \emph{These analyses motivate us to leverage the scale information in sparse depth and the high accuracy of scale-agnostic counterparts for unsupervised depth completion}.

Consequently, in this paper we attempt to explore a new solution to the unsupervised depth completion task, \emph{i.e.}, the \emph{decomposed scale-consistent learning} (DSCL) strategy. As illustrated in Fig.~\ref{comparison_with_KBNet}(a) and (b), totally different from previous approaches that directly estimates scale-consistent depth, our DSCL first learns scale-agnostic relative depth \& real global scale factor and then outputs the absolute target. Theoretically, we prove that such individual learning is more effective and conduces to better depth results. However, as shown in Fig.~\ref{comparison_with_KBNet}(c), since the depth input is extremely sparse (about 5\% valid pixels) and the supervised signal is very weak, the mainstream unsupervised depth completion methods, \emph{e.g.}, the excellent Oral KBNet~\cite{wong2021unsupervised} in \emph{ICCV}, heavily suffers from depth holes which thus lead to large inaccuracy as well as unreasonable visual effect.

To tackle this problem, we present the \emph{global depth guidance} (GDG) module. It first produces coarse but dense depth reference by morphological dilation technology~\cite{jackway1996scale}. Then a novel dense-to-sparse attention is designed to effectively propagate the dense depth reference into the sparse target. Concretely, this attention urges to learn the residual of sparse target by capturing non-local correlations between the sparse-modal and dense-modal features, contributing to satisfactory compensation for depth holes. In addition, a fast version of the dense-to-sparse attention is further proposed to realize high efficiency, which largely reduces the complexity from quadratic to linear.

In summary, our main contributions are listed as follows:
\begin{itemize}
    \item We introduce a new solution to the unsupervised depth completion task, \emph{i.e.}, the decomposed scale-consistent learning framework that disintegrates the absolute depth into relative depth prediction and global scale estimation.
    \item A global depth guidance module is proposed to deal with the issue of depth holes, including a dense-to-sparse attention that learns long-range correlations between sparse-modal and dense-modal features.
    \item Extensive experiments verify the effectiveness of our approach, which achieves the state-of-the-art performance on both outdoor KITTI and indoor NYUv2 benchmarks.
\end{itemize}

\section{Related Work}
\textbf{Depth Completion.} The basic task of depth completion has attracted much public attention since the work~\cite{Uhrig2017THREEDV} first proposes sparsity invariant CNNs to fill missing depth values. In general, depth completion can be broadly categorized into supervised and unsupervised learning. \emph{For supervised learning}, existing methods mainly take as input a single sparse depth or multiple sensor information~\cite{yan2021rignet}, 
which has greatly promoted the development of the depth completion task. For example, \cite{ma2018self} utilize an hourglass network to recover dense depth based on a single sparse depth. \cite{2020FromLu} employ sparse depth as the only input and further use the corresponding color image as an auxiliary supervisory signal to provide semantic information. CSPN~\cite{2018Learning}, NLSPN~\cite{park2020nonlocal}, and DySPN~\cite{lin2022dynamic} refine coarse depth by learning affinity matrix with spatial propagation network based on RGB-D pair. 
GuideNet~\cite{tang2020learning} and RigNet~\cite{yan2021rignet} present image-guided methods to benefit depth completion. DeepLiDAR~\cite{Qiu_2019_CVPR} jointly uses color image, surface normal, and sparse depth for more precise depth recovery. To robustly predict dense depth, uncertainty estimation~\cite{vangansbeke2019,Qu_2021_ICCV,zhu2021robust} is introduced to tackle outlier and obscure. \emph{For unsupervised learning}, there are lots of works~\cite{ma2018self,2020Dense,wong2021unsupervised} focusing on simultaneously completing sparse depth and reducing expensive ground-truth costs. For example, \cite{ma2018self} build a solid framework to concurrently deal with supervised, self-supervised, and unsupervised depth completion. \cite{wong2021learning} utilize synthetic data for further improvement. Recently, KBNet~\cite{wong2021unsupervised} proposes calibrated backprojection that significantly facilitates the unsupervised depth completion task. However, most of these methods directly predict absolute depth and often suffer from depth holes near key elements for self-driving, \emph{e.g.}, cars. Different from them, we present a new unsupervised solution that decomposes the absolute depth into relative depth prediction and global scale estimation.

\noindent \textbf{Depth Estimation.} The tasks of depth completion and depth estimation are closely relevant. The major difference between them is that the former has additional sparse depth information as input while the latter does not. Research on depth estimation can trace further back to the early method~\cite{saxena2005learning}. Since then, many \emph{supervised approaches}~\cite{roy2016monocular,lee2019big,zhang2019pattern} have been proposed which greatly promote the development of this domain. Furthermore, \cite{zhou2017unsupervised} propose the first \emph{unsupervised depth estimation} system, which takes view synthesis as the supervisory signal during end-to-end training. This work has laid a good foundation for subsequent research. After that, various unsupervised works~\cite{godard2019digging,bian2019unsupervised,shu2020featdepth,zhao2020towards,he_ra_depth} are burgeoning over the past three years. Although these methods suffer from the scale ambiguity issues all along, they have achieved promising performance with high depth accuracy. Therefore, it becomes possible to explore a new solution to unsupervised depth completion task based on these scale-agnostic frameworks and real scale information in sparse depth input.

\noindent \textbf{Scale Decomposition in Depth.} There are some depth estimation works related to scale decomposition, including scale-agnostic and scale-consistent categories. \emph{For scale-agnostic methods},~\cite{eigen2014depth} present to learn relative depth that is normalized to (0, 1) to tackle the ambiguous scale issue. Meanwhile,~\cite{xian2020structure} propose pair-wise ranking loss guided by structure to improve the quality of depth prediction. Further,~\cite{ranftl2020towards} ameliorates the generalization capability on multiple datasets with different scales.~\cite{wang2020sdc} build a new framework by depth and scale decomposition in a supervised manner. \emph{For scale-consistent approaches}, many works~\cite{chen2019self,wang2021can} utilize geometric consistency in 2D and 3D spaces to model consistent scales. In addition,~\cite{guizilini20203d} takes camera velocity as extra supervised signal to mitigate the scale-agnostic issue. Moreover,~\cite{tiwari2020pseudo} introduce bundle-adjusted 3D scene structures to benefit their depth prediction network. Different from them, to seek a new solution to the unsupervised depth completion task, we predict scale-consistent depth via scale-agnostic basis with the help of real scale information in sparse depth input.

 \begin{figure*}[t]
  \centering
  \includegraphics[width=1.5\columnwidth]{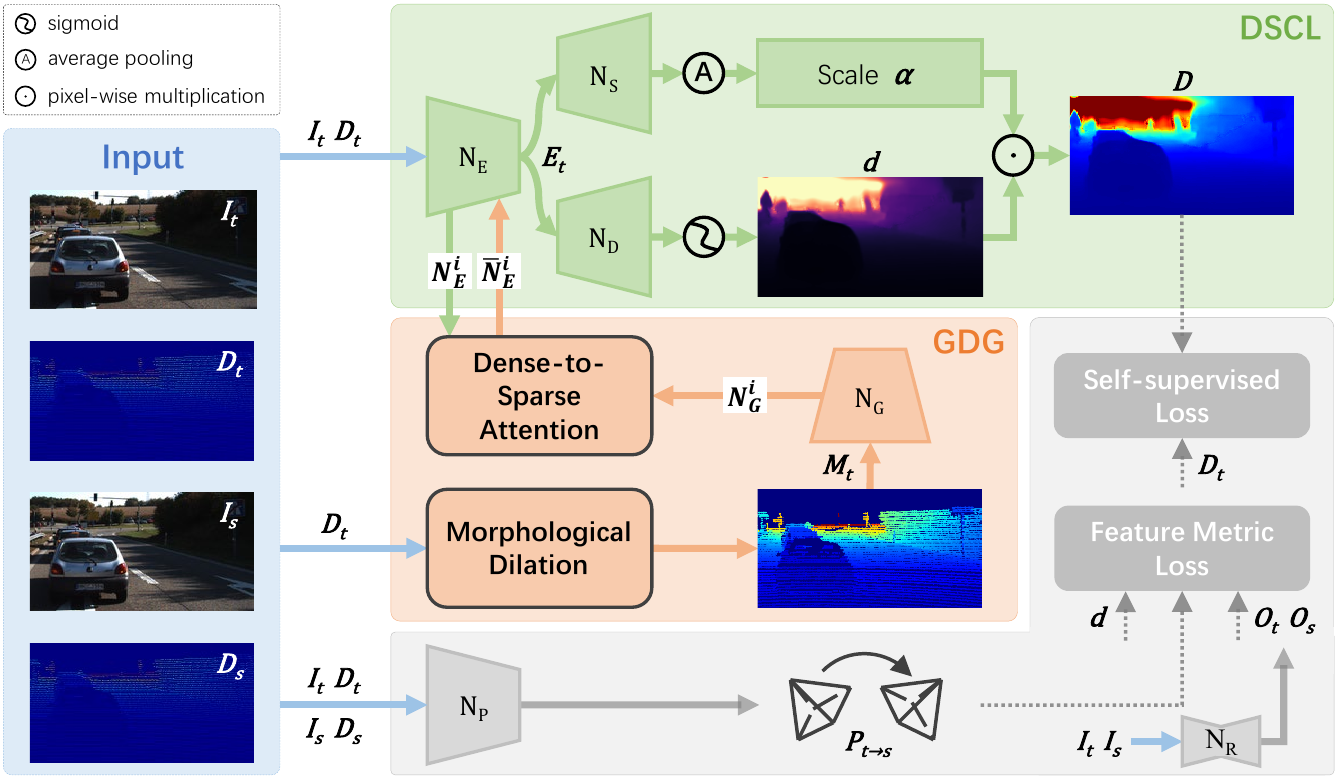}\\
  \caption{Overview of our unsupervised depth completion framework, where the \emph{decomposed scale-consistent learning} (DSCL) is designed to disentangle the absolute depth into relative depth prediction and absolute scale estimation. Meanwhile, the \emph{global depth guidance} (GDG) is introduced to provide the depth network in DSCL with dense depth reference.}\label{Fig.2}
\end{figure*}

\section{Decomposed Scale-Consistent Network}
In this section, we first introduce some prior knowledge in Sec.~\ref{prioi_knowledge} and the overall network architecture in Sec.~\ref{overall_arc}. Then we elaborate on the two key designs of our method in Secs.~\ref{DSCL} and~\ref{GDG}. For simplicity, the proposed \textbf{De}composed \textbf{S}cale-Consistent \textbf{Net}work is termed as \textbf{DesNet}.

\subsection{Prior Knowledge}\label{prioi_knowledge}
\textbf{Camera model.} The optical camera projects a 3D point $ Q=\left(X,Y,Z \right)$ to a 2D pixel $q=\left(u,v \right)$ by
\begin{equation}\label{e_projection}
\begin{split}
    \pi \left( Q \right)=\left( {{h}_{x}}\frac{X}{Z}+{{c}_{x}},{{h}_{y}}\frac{Y}{Z}+{{c}_{y}} \right),
\end{split}
\end{equation}
where $\pi$ represents camera operator, $\left(h_x,h_y,c_x,c_y \right)$ are the optical camera intrinsic parameters. Given depth $\boldsymbol d$ and its pixel $d_q$, the corresponding backprojection process is 
\begin{equation}\label{e_backprojection}
\begin{split}
    {{\pi }^{-1}}\left( q, d_q \right)=d_q{{\left( \frac{x-{{c}_{x}}}{{{h}_{x}}},\frac{y-{{c}_{y}}}{{{h}_{y}}},1 \right)}^T}.
\end{split}
\end{equation}

\noindent \textbf{Ego-motion.} Ego-motion can be modeled by transformation $G$. Warping function $\omega$ maps a pixel $q$ in one frame to another frame, obtaining the corresponding pixel $\widetilde{q}$. It can be described as
\begin{equation}\label{e_motion}
\begin{split}
    \widetilde{q}=\omega \left( q,{{d}_{q}},G \right)=\pi \left( G\cdot {{\pi }^{-1}}\left( q,{{d}_{q}} \right) \right).
\end{split}
\end{equation}

\subsection{Overview of Network Architecture}\label{overall_arc}
The whole framework of our method is shown in Fig.~\ref{Fig.2}. Without loss of generality, we define monocular RGB-D target frames $\boldsymbol I_t$ (color image), $\boldsymbol D_t$ (sparse depth), and source frames $\boldsymbol I_s$, $\boldsymbol D_s$ as input. $\boldsymbol O_t$ and $\boldsymbol O_s$ are the color space features of $\boldsymbol I_t$ and $\boldsymbol I_s$, which are encoded by a shared image reconstruction network $N_R$~\cite{shu2020featdepth}.

For decomposed scale-consistent learning (DSCL), we first predict the relative depth $\boldsymbol d$ by the depth network $N_D$ with sigmoid mapping. Concurrently, the global scale factor $\alpha$ is estimated by the scale network $N_S$ with average pooling based on $\boldsymbol E_{t}$ produced from the shared encoder $N_E$, where a dense-to-sparse attention is proposed to propagate dense depth reference. Finally, we multiply $\boldsymbol d$ by $\alpha$ to generate the scale-consistent absolute depth prediction $\boldsymbol D$.

For global depth guidance (GDG), we first transform $\boldsymbol D_t$ to a denser depth $\boldsymbol M_t$ by morphological dilation technology. Then we employ the guidance network $N_G$ to map $\boldsymbol M_t$ into feature space. The features in $i$th layer of $N_G$ and $N_E$ are $\boldsymbol N^{i}_{G}$ and $\boldsymbol N^{i}_{E}$ respectively, both of which are input into the dense-to-sparse attention module to update $\boldsymbol N^{i}_{E}$ to $\bar{\boldsymbol N}_{E}^{i}$. 

For supervised signal, we first use $\boldsymbol D_t$ as the primary supervision of the absolute depth prediction $\boldsymbol D$. Then following~\cite{shu2020featdepth}, we employ the feature metric loss as the auxiliary supervision, which inputs $\boldsymbol O_t$, $\boldsymbol O_s$, and the pose $\boldsymbol P_{t\to s}$ that is predicted by the pose network $N_P$. Next, we warp $\boldsymbol O_t$ to $\boldsymbol {O}_{t\to s}$ with $\boldsymbol d$ and $\boldsymbol P_{t\to s}$ used. A cross-view reconstruction loss is thus applied between $\boldsymbol {O}_{s}$ and $\boldsymbol {O}_{t\to s}$.

It is worth noting that, when testing, our model only needs $\boldsymbol I_t$ and $\boldsymbol D_t$ to generate the final depth prediction $\boldsymbol D$.

\begin{figure*}[t]
 \centering
 \includegraphics[width=1.8\columnwidth]{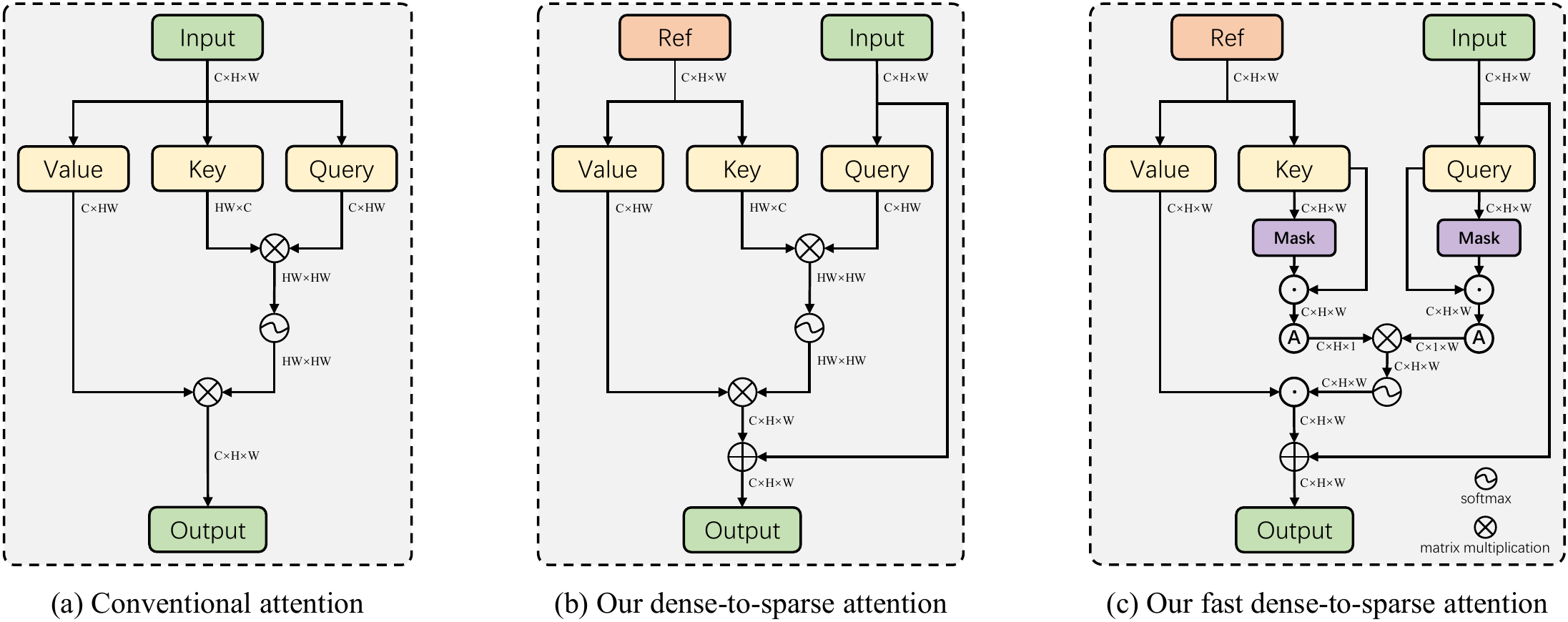}\\
 \caption{Comparison of conventional attention~\cite{dosovitskiy2020image} and our proposed dense-to-sparse attention.}\label{dense-to-sparse}
\end{figure*}

\subsection{Decomposed Scale-consistent Learning}\label{DSCL}
Sparse depth maps possess actual and precise depth values, which contain real scale information~\cite{Uhrig2017THREEDV} that can provide significant guidance for the unsupervised setting. On the other hand, existing scale-agnostic unsupervised depth estimation methods~\cite{godard2019digging} have achieved impressive performance, especially the high accuracy. Accordingly, we attempt to leverage the real scale information in sparse depth and the high accuracy of scale-agnostic counterparts for the unsupervised depth completion task. Hence, as shown in the green box of Fig.~\ref{Fig.2}, we propose the decomposed scale-consistent learning strategy.

Specifically, we decompose the absolute depth $\boldsymbol D$ into relative depth $\boldsymbol d$ prediction and global scale $\alpha $ estimation. The depth network $N_D$ produces $\boldsymbol d\in (0,1]$ with sigmoid function used, which is different from existing unsupervised depth completion methods \cite{ma2018self,2020Dense,wong2021unsupervised} that directly generate 0$\sim$80$m$ absolute depth. Meanwhile, we leverage the scale network $N_S$ to estimate the real scale $\alpha$ with $\boldsymbol E_t$ as input, which is the final layer feature of the shared ResNet-18 encoder $N_E$. The DSCL is defined as
\begin{equation}\label{e_DSCL}
\begin{split}
    &\boldsymbol D=\alpha \cdot \boldsymbol d, \\
    &\alpha=N_{S}\left (\boldsymbol E_{t} \right),
\end{split}
\end{equation}
where $N_{S} \left( \cdot \right)$ refers to the corresponding function of depth network $N_S$. Other functions have the same definition next.

Here, we provide a theory to show that our DSCL can help the network to recover better depth using $\mathcal{L}_2$ loss.

\newtheorem{thm}{Theorem}[section]
\begin{thm}\label{thm1}
Given a sparse depth $\boldsymbol D_{t}$ and a depth prediction $\boldsymbol D$ with network parameters $w$, if $w$ does not satisfy $\boldsymbol D=0$, then there exists a scale factor $\alpha\neq 0$ such that $\sum_{\Omega}(\boldsymbol D_t-\boldsymbol D)^2\geq\sum_{\Omega}(\boldsymbol D_t-\alpha \boldsymbol D)^2$, where $\Omega\neq \emptyset $ is the index set of pixel location using the supervision.
\end{thm}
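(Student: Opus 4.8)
The plan is to recognize the right-hand side as a one-dimensional least-squares problem in the scalar $\alpha$ and exploit convexity of the associated quadratic. Fix $w$, hence fix the prediction $\boldsymbol D$, and define $f(\alpha)=\sum_{\Omega}(\boldsymbol D_t-\alpha\boldsymbol D)^2$. Expanding the square, $f(\alpha)=A\alpha^2-2B\alpha+C$ with $A=\sum_{\Omega}\boldsymbol D^2\ge 0$, $B=\sum_{\Omega}\boldsymbol D_t\boldsymbol D$, and $C=\sum_{\Omega}\boldsymbol D_t^2$. The left-hand side of the claimed inequality is exactly $f(1)$, so the theorem reduces to producing some $\alpha\neq 0$ with $f(\alpha)\le f(1)$.

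Next I would split on whether $A>0$. If $A>0$, then $f$ is a strictly convex parabola with unique global minimizer $\alpha^{\star}=B/A$, so $f(\alpha^{\star})\le f(1)$ holds automatically. When $\alpha^{\star}\neq 0$ we are done, and in fact the inequality is strict whenever $\alpha^{\star}\neq 1$, which is the generic case and the actual learning benefit the theorem is meant to capture. When $\alpha^{\star}=0$ (equivalently $B=0$), one simply observes $f(\alpha)=A\alpha^2+C$ is nondecreasing in $|\alpha|$ on $[0,1]$, so any $\alpha\in(0,1]$ — for instance $\alpha=1$ — satisfies $f(\alpha)\le f(1)$. If instead $A=0$, then $\boldsymbol D$ vanishes on all of $\Omega$, $f$ is constant, and any nonzero $\alpha$ gives equality. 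The hypothesis that $w$ does not satisfy $\boldsymbol D=0$ is what rules out the fully degenerate situation; read as ``$\boldsymbol D$ not identically zero'' it also guarantees $A>0$ in the main case, so the optimal-scale argument goes through cleanly.

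The computations here are entirely routine — a single completion of the square — so I do not expect a genuine obstacle. The only points needing care are (i) ensuring the $\alpha$ we exhibit is actually nonzero, which forces the minor case distinction on $B$ above, and (ii) pinning down the meaning of ``$\boldsymbol D=0$'' in the hypothesis, since the statement is only interesting when $\sum_{\Omega}\boldsymbol D^2>0$. I would finish by noting that in practice $\alpha^{\star}=\big(\sum_{\Omega}\boldsymbol D_t\boldsymbol D\big)\big/\big(\sum_{\Omega}\boldsymbol D^2\big)\neq 1$, so the optimal rescaling strictly decreases the $\mathcal{L}_2$ loss relative to the undecomposed prediction, which is precisely the advantage that motivates the DSCL strategy.
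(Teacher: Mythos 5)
Your proof is correct, and it is actually more general and more careful than the one in the paper. The paper's proof restricts attention to a single pixel $q$ and simply exhibits the closed form $\alpha=\boldsymbol D_t^{q}/\boldsymbol D^{q}$, which drives the single-pixel loss to zero; it never addresses the sum over $\Omega$ that appears in the theorem statement, where no single $\alpha$ can zero out every residual simultaneously. Your completion-of-the-square argument with the least-squares minimizer $\alpha^{\star}=\bigl(\sum_{\Omega}\boldsymbol D_t\boldsymbol D\bigr)/\bigl(\sum_{\Omega}\boldsymbol D^2\bigr)$ is the correct generalization to $|\Omega|>1$, and your case analysis on $A$ and $B$ (falling back to $\alpha=1$ when $\alpha^{\star}=0$, and noting that the non-strict inequality is trivially satisfiable then) closes the gaps the paper leaves open, including the ambiguity in what ``$\boldsymbol D=0$'' means relative to $\Omega$. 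What the paper's version buys is only brevity and the pedagogically clean observation that $\alpha=1$ exactly when the prediction already matches the supervision; what yours buys is a proof that actually matches the quantifiers in the stated theorem and identifies the genuinely optimal global scale, which is also the quantity the scale network $N_S$ is meant to learn. The one cosmetic remark worth adding: since the inequality is non-strict, $\alpha=1$ witnesses the theorem vacuously, so the substantive content is your observation that the inequality is strict whenever $\alpha^{\star}\neq 1$ --- you are right to flag that as the real point.
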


\begin{proof}
For simplicity, we consider only one pixel $q$ in the proof process. For the traditional self-supervision, the loss is $\min_{w}(\boldsymbol D_t^{q}-\boldsymbol D^q )^2$. For DSCL, we introduce a scale factor $\alpha$ to the loss, and have a new loss $\min_{w}(\boldsymbol D_t^{q}-\alpha \boldsymbol D^q )^2$. It is easy to prove that if $\boldsymbol D^q\neq 0$, then there has a $\alpha$ such that $(\boldsymbol D_t^{q}-\boldsymbol D^q )^2\geq(\boldsymbol D_t^{q}-\alpha \boldsymbol D^q )^2$. Furthermore, $\alpha$ has a closed form $\alpha=\boldsymbol D_t^{q}/\boldsymbol D^{q}$. When $\boldsymbol D^{q}=\boldsymbol D_t^{q}$, $\alpha=1$. 
\end{proof}

Theorem \ref{thm1} shows that $\alpha \boldsymbol D$ is closer to the supervised signal $\boldsymbol D_t$ than the original $\boldsymbol D$. In fact, it is impossible for the network to predict $\boldsymbol D$ that equals to $\boldsymbol D_t$. Thus, $\alpha\neq 1$ drives that $\alpha \boldsymbol D$ has better approximation than $\boldsymbol D$. It reveals that our network has a strong practical significance to predict the depth. Furthermore, we employ a scale network to learn the scale factor from the data and map the absolute $\boldsymbol D$ to a relative $\boldsymbol d$, which is easier for networks to optimize.

\begin{figure}[t]
 \centering
 \includegraphics[width=0.572\columnwidth]{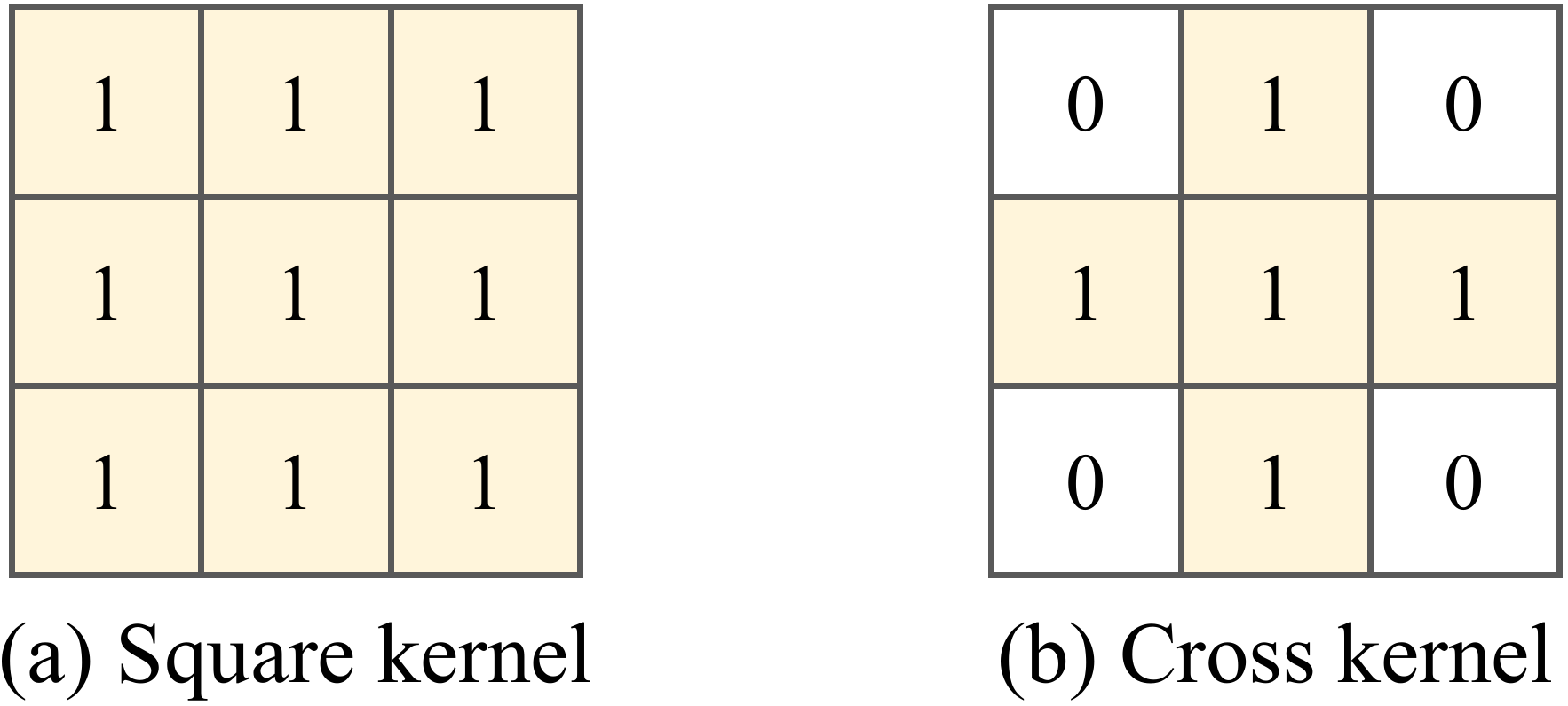}\\
 \caption{Different kernels with square and cross shapes.}\label{dilation kernel}
\end{figure}

\subsection{Global Depth Guidance}\label{GDG}
Existing unsupervised depth completion works~\cite{ma2018self,shivakumar2019dfusenet,2020Dense} have shown promising performance. However, most of them~\cite{wong2020unsupervised,wong2021learning,wong2021adaptive,wong2021unsupervised} suffer from depth holes in depth results. When we try to mitigate this issue, there are \emph{two problems} impeding us. 

\emph{The first problem is that,} compared with related image processing works whose inputs are totally complete, the depth input $\boldsymbol D_t$ so sparse that it cannot provide dense information, which is the main cause of depth holes. To alleviate this issue, we take advantage of morphological dilation technology~\cite{jackway1996scale} with different kernels (Fig.~\ref{dilation kernel}) to provide coarse but much denser depth $\boldsymbol M_t$.

\emph{The second problem is that}, how to fuse the dense depth reference and the sparse depth input? Inspired by the conventional attention~\cite{dosovitskiy2020image}, as illustrated in Fig.~\ref{dense-to-sparse}, we propose the dense-to-sparse attention. We first use $N_E$ and $N_G$ to map the RGB-D input ($\boldsymbol I_t$, $\boldsymbol D_t$) and the dense $\boldsymbol M_t$ into feature spaces, generating $\boldsymbol N^{i}_{E}$ and $\boldsymbol N^{i}_{G}$ in the $i$th network layer. Then, the dense-to-sparse attention propagates the dense depth reference $\boldsymbol N^{i}_{G}$ into the sparse depth feature $\boldsymbol N^{i}_{E}$, and thus obtaining the updated feature $\bar{\boldsymbol N}_{E}^{i}$. In addition, it is well-known that such attention has very high complexity even though its strong performance. Therefore, we design a fast version to deal with this issue.

The above process can be described as
\begin{equation}\label{e_GDG}
\begin{split}
    &\bar{\boldsymbol N}_{E}^{i}=f_{att}\left (\boldsymbol N^{i}_{E}, \boldsymbol N^{i}_{G} \right), \\
    &\boldsymbol N_{E}^{i}=N_{E}\left (\boldsymbol I_{t},\boldsymbol D_{t} \right), \\
    &\boldsymbol N_{G}^{i}=N_{G}\left (f_{dil}\left (\boldsymbol D_{t} \right) \right),
\end{split}
\end{equation}
where $f_{att} \left( \cdot \right)$ denotes the densn-to-sparse attention function and $f_{dil} \left( \cdot \right)$ is the morphological dilation function.

Different from the conventional attention~\cite{dosovitskiy2020image} in Fig.~\ref{dense-to-sparse}(a) that takes single-modal input as value, key, and query, our attention in Fig.~\ref{dense-to-sparse}(b) employs sparse depth feature as query, and dense depth reference as value and key. Also unlike the multi-modal attention~\cite{rho2022guideformer,li2022deepfusion} which use absolutely different-modal (RGB and LiDAR) data, our attention only leverages LiDAR data, \emph{i.e.}, sparse-modal and dense-modal depth. Additionally, to reduce the high complexity of (a) and (b) that equals to $(HW)^2$ when calculating the correlation between key and query, we design a fast version of the dense-to-sparse attention in Fig.~\ref{dense-to-sparse}(c), where \emph{two key steps} are conducted. \emph{One key step} is the binary mask, aiming to reduce redundancy existed in the long-range correlation since not every pixel in key is always related to that in query. \emph{Another key step} is the strip average pooling, compressing the $HW$ key and $HW$ query into $H$ and $W$, respectively. Then we multiply the compressed key by the compressed query to obtain the correlation matrix whose complexity is only $HW$, much smaller than $(HW)^2$ of the conventional attention.

\subsection{Loss Function}
The total loss function contains a cross-view reconstruction loss $\mathcal{L}_{t \to s}$, a single-view reconstruction loss $\mathcal{L}_{si}$, and a self-supervised loss $\mathcal{L}_2$ to predict the final depth result $\boldsymbol D$.

\textbf{Cross-view reconstruction loss.} 
Based on the geometry model defined in Eq.~\ref{e_projection}, the source frame $\boldsymbol O_s$ can be rebuilt from target frame $\boldsymbol O_t$ via ${\boldsymbol {\widetilde{O}}_{t\to s}}(q)={\boldsymbol {O}_{t}}(\widetilde{q})$. Then, the cross-view reconstruction loss is
\begin{equation}\label{e_cross-view-2}
\begin{split}
    \mathcal{L}_{t \to s}=\sum\nolimits_{q}{{\left|{\boldsymbol {O}_{t}}\left( \widetilde{q} \right)-{\boldsymbol {O}_{s}}\left( q \right) \right|}}.
\end{split}
\end{equation}

\textbf{Single-view reconstruction loss.}
Given color image $\boldsymbol I$, the shared reconstruction network $N_R$ maps the feature representation $\boldsymbol O$. The single-view reconstruction loss is
\begin{equation}\label{e_single-view}
\begin{split}
    \mathcal{L}_{si}=&\sum\nolimits_{q}{{\left|\boldsymbol I(q)-\boldsymbol O(q) \right|}}+\alpha \sum\nolimits_{q}{{\left| {{\nabla }^{2}}\boldsymbol O(q) \right|}}\\
    &+\beta \left( -\sum\nolimits_{q}{{{e}^{-{{\left| {{\nabla }^{1}}\boldsymbol I(q) \right|}}}}\centerdot {{\left| {{\nabla }^{1}}\boldsymbol O(q) \right|}}} \right),
\end{split}
\end{equation}
where $\alpha=\beta=1e-3$, $\nabla^1$ and $\nabla^2$ denote the first-order derivative and the the second-order derivative, respectively.

\textbf{Decomposed scale-consistent learning loss.}
The final depth $\boldsymbol D$ is supervised by $\mathcal{L}_2$ loss, which can be defined as
\begin{equation}\label{e_l2}
\begin{split}
    \mathcal{L}_{2}=\sum\nolimits_{\text{q}}{{{\left| {\boldsymbol {D}_{gt}}\left( q \right)-\boldsymbol D\left( q \right) \right|}^{2}}}.
\end{split}
\end{equation}

\textbf{Total loss.}
Finally, the total loss function is written as
\begin{equation}\label{e3}
\begin{split}
    \mathcal{L}=\mathcal{L}_{t \to s}+\mathcal{L}_{si}+\gamma \mathcal{L}_2,
\end{split}
\end{equation}
where $\gamma$ is set to 1 during training. Please refer to \cite{shu2020featdepth} for more details about $\mathcal{L}_{t \to s}$ and $\mathcal{L}_{si}$ loss functions.

\section{Experiment}
Here, we first introduce related datasets and implementation details. Then we conduct ablation studies to verify the effectiveness of our method. Finally, we compare our method against other state-of-the-art approaches. Following KITTI benchmark, RMSE ($mm$) is selected as the \emph{primary metric}.

\subsection{Datasets and Implementation Details}

\textbf{KITTI benchmark}~\cite{Uhrig2017THREEDV} consists of \textbf{86,898} RGB-D pairs for training, 7,000 for validating, and another 1,000 for testing. The official 1,000 validation images are used during training while the remaining images are ignored. Following GuideNet~\cite{tang2020learning}, RGB-D pairs are bottom center cropped from $1216 \times352$ to $1216 \times256$, as there are no valid LiDAR values near top 100 pixels.

\noindent \textbf{NYUv2 dataset}~\cite{silberman2012indoor} contains 464 RGB-D indoor scenes with $640 \times480$ resolution. Following KBNet~\cite{wong2021unsupervised}, we train our model on 46K frames and test on the official test set with 654 images. The sparse depth input is artificially produced by sampling about 1500 valid points from the ground-truth (GT) depth.

\noindent \textbf{Implementation Details.} We implement DesNet on Pytorch with 2 TITAN RTX GPUs. We train it for 25 epochs with Adam \cite{Kingma2014Adam} optimizer. The learning rate is gradually warmed up to $10^{-4}$ in 3 steps, where each step increases learning rate by $10^{-4}/3$ in 500 iterations. After that, the learning rate $10^{-4}$ is used for the first 20 epochs and is reduced to half at the beginning of the 20th epoch.

\subsection{Ablation Studies}
This subsection verifies the effectiveness of DesNet, including the decomposed scale-consistent learning (DSCL) and global depth guidance (GDG), on \textbf{KITTI validation split}. \emph{Gray background} in Tabs.~\ref{t_DesNet}-~\ref{t_GDG_attention} refers to our \emph{default setting}.

\textbf{DesNet.} As reported in Tab.~\ref{t_DesNet}, the baseline DesNet-i directly predicts absolute depth that is supervised by the total loss in Eq.~\ref{e3} without using scale decomposition, \emph{i.e.}, its depth network is an UNet which consists of our $N_{E}$ and $N_D$ without using sigmoid mapping. \textbf{(1)} When employing our DSCL strategy (DesNet-ii), we observe that all four evaluation metrics are consistently improved, \emph{e.g.}, RMSE is reduced by $91.9mm$ and MAE by $24.1mm$. As shown in the 3rd and 4th columns of Fig.~\ref{ab_vis}, DSCL notably contributes to sharper depth details and more complete object shapes. These numerical and visual results provide evidence that our DSCL, disintegrating the learning of absolute depth into \emph{explicit} relative depth prediction and scale estimation, assuredly reduces the learning difficulty and brings individual learning benefits (Theorem~\ref{thm1}). \textbf{(2)} When conducting our GDG module (DesNet-iii), the model performance is significantly improved. The RMSE, MAE, iRMSE, and iMAE outperform the baseline by $114.6mm$, $27.9mm$, $0.5{km}^{-1}$, and $0.3{km}^{-1}$, respectively. As illustrated in the 2nd and 3rd columns of Fig.~\ref{ab_vis}, GDG remarkably corrects the wrong depth values near cars and can compensate depth holes well where even if the GT depth annotations have no valid pixels. These evidences indicate that GDG is able to provide valid


\begin{figure}[t]
\centering
\includegraphics[width=0.95\columnwidth]{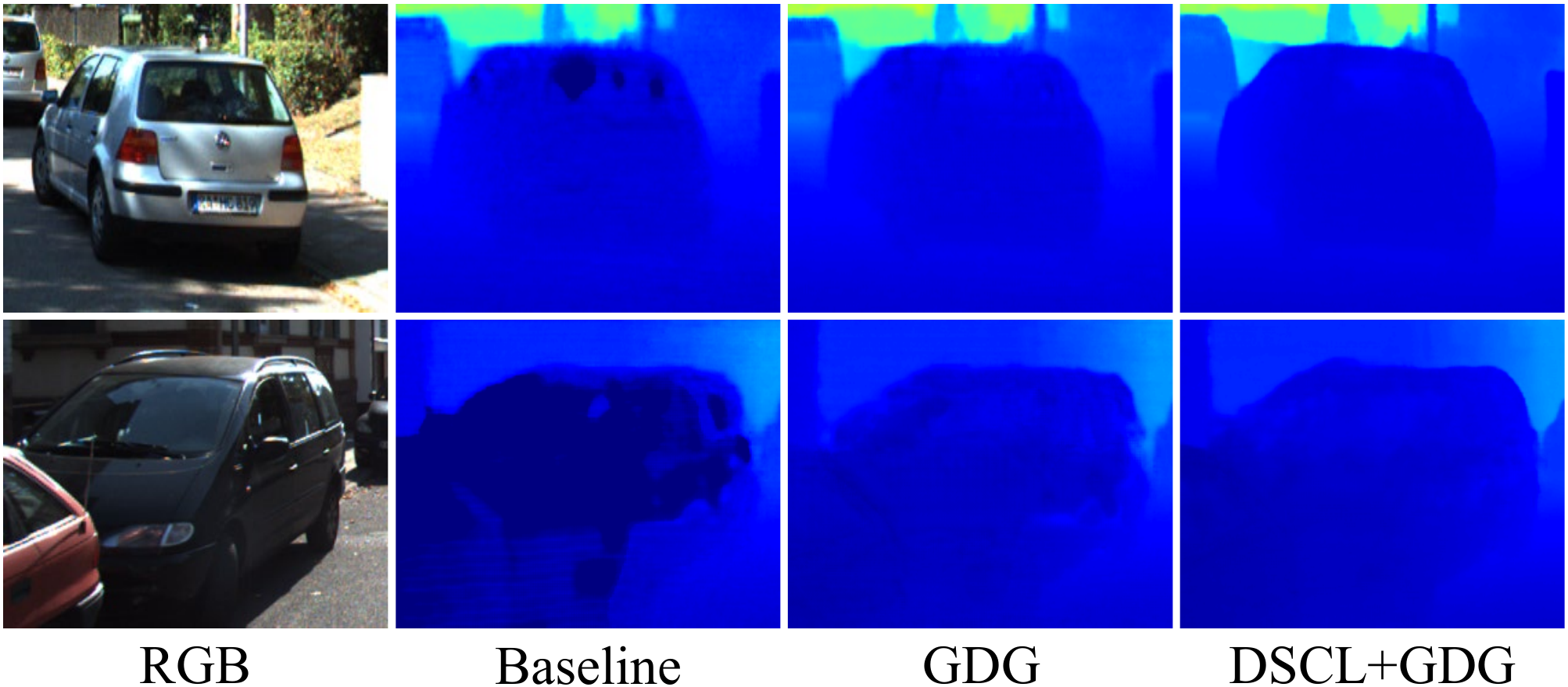}
\caption{Visual comparison of ablation studies in Tab.~\ref{t_DesNet}.}
\label{ab_vis}
\end{figure}

\begin{table}[t]
\centering
\renewcommand\arraystretch{1.2}
\resizebox{0.465\textwidth}{!}{
\begin{tabular}{l|cc|cccc}
\hline
DesNet   & DSCL        & GDG        & RMSE    & MAE    & iRMSE  & iMAE  \\ \hline
i        &             &            & 1176.4  & 336.3  & 3.7    & 1.8   \\
ii       & \checkmark  &            & 1084.5  & 312.2  & 3.3    & 1.6   \\
iii      &             & \checkmark & 1061.8  & 308.4  & 3.2    & 1.5   \\
\rowcolor[HTML]{E8E8E8}
iv       & \checkmark  & \checkmark & \textbf{969.3}   & \textbf{285.0}  & \textbf{3.0}    & \textbf{1.3}   \\ \hline
\end{tabular}
}
\caption{Ablation on DesNet with DSCL and GDG designs.}
\label{t_DesNet}
\end{table}

\begin{table}[h!]
\centering
\tiny
\renewcommand\arraystretch{1.2}
\resizebox{0.465\textwidth}{!}{
\begin{tabular}{l|cccc}
\hline
DSCL        & RMSE    & MAE    & iRMSE  & iMAE  \\ \hline
\rowcolor[HTML]{E8E8E8}
num.=1      & 1084.5  & 312.2  & 3.3    & 1.6   \\             
num.=4      & 1076.8  & 310.5  & 3.3    & 1.5   \\   
num.=8      & 1070.6  & 308.7  & 3.2    & 1.5   \\
num.=16     & \textbf{1068.2}  & \textbf{308.2}  & \textbf{3.1}    & \textbf{1.4}   \\ 
num.=HW     & 1189.4  & 343.5  & 3.5    & 1.6   \\ \hline
\end{tabular}
}
\caption{Ablation on DSCL with different numbers (num.) of elements in the scale factor matrix.}
\label{t_DSCL}
\end{table}

\noindent reference for areas of missing depth, owing to the dilation and attention designs from which the model is urged to learn prior density. \textbf{(3)} Finally, to combine the best of both worlds, we simultaneously embed DSCL and GDG (DesNet-iv) into the baseline. Consequently, our model performs much better than DesNet-i, enormously surpassing it by $207.1mm$ in RMSE, $51.3mm$ in MAE, $0.7{km}^{-1}$ in iRMSE, and $0.5{km}^{-1}$ in iMAE. Besides, comparing the 4th column with the 2nd column of Fig.~\ref{ab_vis}, it is noticeable that depth predictions of DesNet-iv clearly possess more reasonable visual effects than those of the baseline DesNet-i.

\textbf{DSCL.} The final layer of our scale network is the adaptive average pooling function. Therefore, the number of elements in the scale factor matrix can be arbitrary in theory. For example, num.=4 will lead to quartering relative depth. Then we multiply each of the quartering by the corresponding element in the scale factor matrix, finally obtaining the absolute depth prediction. Tab.~\ref{t_DSCL} reports the cases of 1, 4, 8, 16, and HW. We can find that, \textbf{(1)} as the number increases (num.$\le16$), the performance of the model gets better and better. It demonstrates that multiple region-aware scale elements is more accurate than single scale element for the full-resolution relative depth. \textbf{(2)} When num. reaches the maximum HW, the model performs even worse than the baseline DesNet-i in Tab.~\ref{t_DesNet}, which is mainly caused by the more difficult model learning. That said, two equally complex prediction targets, the full-resolution scale and relative depth, are harder for the network to learn than the single absolute depth target. Hence, the number of elements in our decomposed scale matrix is bounded in light of good performance.

\begin{table}[t]
\centering
\small
\renewcommand\arraystretch{1.2}
\resizebox{0.465\textwidth}{!}{
\begin{tabular}{l|c|cccc}
\hline
GDG-dilation   &  size  & RMSE    & MAE    & iRMSE  & iMAE  \\ \hline
bilinear       &   -    & 1162.7  & 335.1  & 3.6    & 1.8   \\
nearest        &   -    & 1148.4  & 331.0  & 3.4    & 1.7   \\ \hline
cross          &   3    & 1106.6  & 321.2  & 3.5    & 1.7   \\
square         &   3    & 1088.3  & 315.7  & 3.3    & 1.6   \\
\rowcolor[HTML]{E8E8E8}
square         &   5    & \textbf{1061.8}  & \textbf{308.4}  & \textbf{3.2}    & \textbf{1.5}   \\
square         &   7    & 1112.9  & 324.3  & 3.5    & 1.6   \\ \hline
\end{tabular}
}
\caption{Ablation on GDG with different dense depth reference produced by bilinear/nearest interpolation and morphological dilation with cross/square dilation kernels.}
\label{t_GDG_dilation}
\end{table}

\begin{table}[t]
\centering
\scriptsize
\renewcommand\arraystretch{1.2}
\resizebox{0.465\textwidth}{!}{
\begin{tabular}{l|cccc}
\hline
GDG-attention   & RMSE   & MAE      & Memory    & Time      \\ \hline
CA              & 1040.2 & 304.0    & +13.64    & +64.5     \\  
DSA             & \textbf{1024.5} & \textbf{297.4}    & +13.64    & +64.6     \\
FDSA w/o mask   & 1065.7 & 310.6    & +\textbf{4.20}     & +\textbf{12.3}     \\
\rowcolor[HTML]{E8E8E8}
FDSA w/ mask    & 1061.8 & 308.4    & +4.21     & +12.5     \\ \hline
\end{tabular}
}
\caption{Ablation on GDG with different attentions, \emph{i.e.}, the conventional attention (CA), our dense-to-sparse attention (DSA), and our fast dense-to-sparse attention (FDSA). GPU memory (G) and inference time (ms) are also considered.}
\label{t_GDG_attention}
\end{table}

\textbf{GDG-dilation.} Based on DesNet-i, Tab~\ref{t_GDG_dilation} displays the comparison of model performance using different manners to generate coarse but dense depth. On the whole, we observe that, \textbf{(1)} both interpolation and dilation methods can benefit our model since they can produce denser depth reference to compensate depth holes. Specifically, \textbf{(2)} the interpolation approach only slightly improves the baseline DesNet-i, while the dilation manner performs better. It shows that the dilation manages generating more precise dense depth than interpolation, of which can also find some evidence in~\cite{ku2018defense}. \textbf{(3)} For one thing, dilation kernels with same size but different shapes have diverse performance. Square-3 is $18.3mm$ slightly lower than that of cross-3, owing to the higher-quality depth results with denser pixels and lower error generated by the denser square dilation kernel. For another thing, dilation kernels with same shape but different sizes still have slight distinctions. The last three rows of Table~\ref{t_GDG_dilation} verifies that square-5 achieves the lowest errors among square kernels with $3\times3$, $5\times5$, and $7\times7$ sizes, which can be viewed as size-accuracy trade-off.

\begin{figure*}[t]
\centering
\includegraphics[width=2.03\columnwidth]{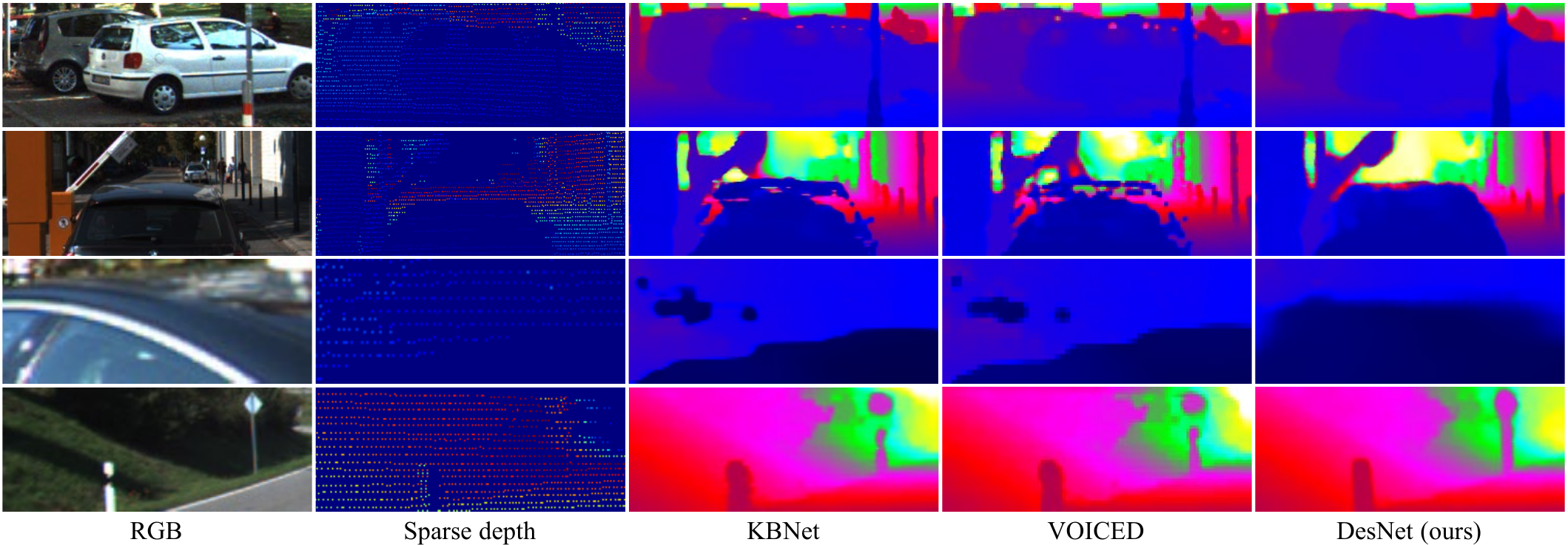}\\
\caption{Visual comparison on KITTI online depth completion benchmark, where warmer color refers to longer distance.}\label{kitti_vis}
\end{figure*}

\begin{table}[t]
\centering
\renewcommand\arraystretch{1.2}
\resizebox{0.465\textwidth}{!}{
\begin{tabular}{l|ccccc}
\hline
Method              & RMSE     & MAE     & iRMSE & iMAE  & \#P  \\ \hline
S2D                 & 1299.85  & 350.32  & 4.07  & 1.57  & 27.8 \\
IP-Basic            & 1288.46  & 302.60  & 3.78  & 1.29  & 0.0  \\
DFuseNet            & 1206.66  & 429.93  & 3.62  & 1.79  & n/a  \\
DDP$^{\ast}$        & 1263.19  & 343.46  & 3.58  & 1.32  & 18.8 \\
VOICED              & 1169.97  & 299.41  & 3.56  & 1.20  & 9.7  \\
AdaFrame            & 1125.67  & 291.62  & 3.32  & 1.16  & 6.4  \\
ScaffNet$^{\ast}$   & 1121.93  & 280.76  & 3.30  & 1.15  & 7.8  \\
SynthProj$^{\ast}$  & 1095.26  & 280.42  & 3.53  & 1.19  & 2.6  \\
KBNet               & \underline{1068.07}  & \textbf{258.36} & \underline{3.01} & \textbf{1.03} & 6.9 \\
DesNet              & \textbf{938.45}  & \underline{266.24}  & \textbf{2.95}  & \underline{1.13} & 13.4 \\ \hline
\end{tabular}
}
\caption{Quantitative results on KITTI test set. $\ast$ denotes extra synthetic data and \#P refers to model parameters (M).
}
\label{t_KITTI}
\end{table}

\textbf{GDG-attention.} Based on square-5, Tab~\ref{t_GDG_attention} validates different attention mechanisms. Overall, we discover that, \textbf{(1)} CA~\cite{dosovitskiy2020image}, our DSA, and our FDSA consistently have positive impacts on error metrics since they effectively propagate valid dense depth reference into sparse targets. \textbf{(2)} Our DSA is superior to all three others in terms of RMSE and MAE. With similar complexity, DSA surpasses CA by $15.7mm$ in RMSE and $6.6mm$ in MAE, showing that the dense-modal and sparse-modal fusion in DSA is more reasonable than the simple addition in CA. \textbf{(3)} Compared with CA, With acceptable performance degradation, \emph{i.e.}, averagely $23.55mm$ in RMSE and $5.5mm$ in MAE, our FDSA without mask largely reduces GPU memory by 9.44G, and accelerate the inference speed from $64.6ms$ to $12.3ms$. Therefore, our FDSA design is GPU-friendly. Besides, FDSA with mask slightly outperforms FDSA without mask, demonstrating the robustness of our mask strategy.

\subsection{Comparison with SoTA Methods}
Here, we compare our DesNet with existing state-of-the-art (SoTA) methods on KITTI and NYUv2 datasets, including S2D, IP-Basic, DFuseNet, and Alex Wong's series of works.

\textbf{On outdoor KITTI}, as shown in Tab.~\ref{t_KITTI}, by combining DSCL and GDG designs, DesNet achieves the lowest RMSE \& iRMSE and competitive MAE \& iMAE among all mentioned approaches. Especially in RMSE, our DesNet surpasses the best KBNet by a large margin $129.62mm$, while KBNet outperforms the second best SynthProj$^{\ast}$ only by $27.19mm$. Also, DesNet obtains competitive results in MAE and iMAE, ranking second. As we know that, RMSE is very sensitive to large depth value while MAE is more sensitive to small one. Thus, \textbf{(i)} the lowest RMSE denotes that DesNet can predict more accurate depth in long-range region. \textbf{(ii)} Worse MAE indicates that DesNet is not very good enough at recovering precise depth in close-range region. \textbf{(iii)} However, as shown in Figs.~\ref{comparison_with_KBNet} and \ref{kitti_vis}, cars in close-range region predicted by DesNet are \emph{much denser} than others. Our recovery has more reasonable visual effect. Besides, the semi-dense (about 30\%) GT depth also lacks many valid points, where the pixels are ignored when computing errors, obscuring the merits of DesNet in terms of evaluation metrics. To further validate the generalization of our method, we conduct comparative experiment \textbf{on indoor NYUv2}. Tab.~\ref{t_NYUv2} demonstrates that our DesNet achieves outstanding performance as well, which is, \emph{e.g.}, $9.51mm$ superior to the best KBNet in RMSE. In short, these evidences confirm that our DesNet actually possesses strong and robust performance.

\begin{table}[t]
\centering
\tiny
\renewcommand\arraystretch{1.2}
\resizebox{0.462\textwidth}{!}{
\begin{tabular}{l|cccc}
\hline
Method        & RMSE     & MAE     & iRMSE  & iMAE    \\ \hline
SynthProj     & 235.64   & 134.62  & 57.13  & 29.84   \\
VOICED        & 228.38   & 127.61  & 54.70  & 28.89   \\
ScaffNet      & 199.31   & 117.49  & 44.06  & 24.89   \\
KBNet         & \underline{197.77} & \underline{105.76} & \underline{42.74}  & \textbf{21.37}  \\
DesNet        & \textbf{188.26}  & \textbf{103.42}  & \textbf{38.57}  & \underline{21.44} \\ \hline
\end{tabular}
}
\caption{Quantitative results on NYUv2 official test split.}
\label{t_NYUv2}
\end{table}

\section{Conclusion}
In this paper, we proposed DesNet to utilize both the real scale information in sparse depth and the high accuracy of scale-agnostic counterpart for unsupervised depth completion, which decomposed the learning of absolute depth into relative depth prediction and global scale estimation. Such explicit learning does bring benefit. Further, to tackle the issue of depth holes, we introduced the global depth guidance to produce denser depth reference and attentively propagate it into the sparse target, severally using morphological dilation and dense-to-sparse attention. Owing to these designs, DesNet is remarkably superior to existing SoTA approaches.

\section{Acknowledgement}
The authors would like to thank all reviewers for their instructive comments. This work was supported by the National Science Fund of China under Grant Nos.~U1713208 and 62072242. Note that the PCA Lab is associated with, Key Lab of Intelligent Perception and Systems for High-Dimensional Information of Ministry of Education, and Jiangsu Key Lab of Image and Video Understanding for Social Security, School of Computer Science and Engineering, Nanjing University of Science and Technology.

\bibliography{aaai23}

\begin{thebibliography}{47}
\providecommand{\natexlab}[1]{#1}

\bibitem[{Bian et~al.(2019)Bian, Li, Wang, Zhan, Shen, Cheng, and
  Reid}]{bian2019unsupervised}
Bian, J.; Li, Z.; Wang, N.; Zhan, H.; Shen, C.; Cheng, M.-M.; and Reid, I.
  2019.
\newblock Unsupervised scale-consistent depth and ego-motion learning from
  monocular video.
\newblock In \emph{NeurIPS}.

\bibitem[{Chen, Schmid, and Sminchisescu(2019)}]{chen2019self}
Chen, Y.; Schmid, C.; and Sminchisescu, C. 2019.
\newblock Self-supervised learning with geometric constraints in monocular
  video: Connecting flow, depth, and camera.
\newblock In \emph{ICCV}, 7063--7072.

\bibitem[{Cheng, Wang, and Yang(2018)}]{2018Learning}
Cheng, X.; Wang, P.; and Yang, R. 2018.
\newblock Learning Depth with Convolutional Spatial Propagation Network.
\newblock In \emph{ECCV}.

\bibitem[{Dosovitskiy et~al.(2020)Dosovitskiy, Beyer, Kolesnikov, Weissenborn,
  Zhai, Unterthiner, Dehghani, Minderer, Heigold, Gelly
  et~al.}]{dosovitskiy2020image}
Dosovitskiy, A.; Beyer, L.; Kolesnikov, A.; Weissenborn, D.; Zhai, X.;
  Unterthiner, T.; Dehghani, M.; Minderer, M.; Heigold, G.; Gelly, S.; et~al.
  2020.
\newblock An image is worth 16x16 words: Transformers for image recognition at
  scale.
\newblock \emph{arXiv preprint arXiv:2010.11929}.

\bibitem[{Eigen, Puhrsch, and Fergus(2014)}]{eigen2014depth}
Eigen, D.; Puhrsch, C.; and Fergus, R. 2014.
\newblock Depth map prediction from a single image using a multi-scale deep
  network.
\newblock In \emph{NeurIPS}.

\bibitem[{Godard et~al.(2019)Godard, Mac~Aodha, Firman, and
  Brostow}]{godard2019digging}
Godard, C.; Mac~Aodha, O.; Firman, M.; and Brostow, G.~J. 2019.
\newblock Digging into self-supervised monocular depth estimation.
\newblock In \emph{ICCV}.

\bibitem[{Guizilini et~al.(2020)Guizilini, Ambrus, Pillai, Raventos, and
  Gaidon}]{guizilini20203d}
Guizilini, V.; Ambrus, R.; Pillai, S.; Raventos, A.; and Gaidon, A. 2020.
\newblock 3d packing for self-supervised monocular depth estimation.
\newblock In \emph{CVPR}, 2485--2494.

\bibitem[{Jackway and Deriche(1996)}]{jackway1996scale}
Jackway, P.~T.; and Deriche, M. 1996.
\newblock Scale-space properties of the multiscale morphological
  dilation-erosion.
\newblock \emph{IEEE transactions on pattern analysis and machine
  intelligence}, 18(1): 38--51.

\bibitem[{Kingma and Ba(2014)}]{Kingma2014Adam}
Kingma, D.~P.; and Ba, J. 2014.
\newblock Adam: A Method for Stochastic Optimization.
\newblock \emph{Computer Science}.

\bibitem[{Ku, Harakeh, and Waslander(2018)}]{ku2018defense}
Ku, J.; Harakeh, A.; and Waslander, S.~L. 2018.
\newblock In defense of classical image processing: Fast depth completion on
  the cpu.
\newblock In \emph{CRV}.

\bibitem[{Lee et~al.(2019)Lee, Han, Ko, and Suh}]{lee2019big}
Lee, J.~H.; Han, M.-K.; Ko, D.~W.; and Suh, I.~H. 2019.
\newblock From big to small: Multi-scale local planar guidance for monocular
  depth estimation.
\newblock \emph{arXiv preprint arXiv:1907.10326}.

\bibitem[{Li et~al.(2022)Li, Yu, Meng, Caine, Ngiam, Peng, Shen, Lu, Zhou, Le
  et~al.}]{li2022deepfusion}
Li, Y.; Yu, A.~W.; Meng, T.; Caine, B.; Ngiam, J.; Peng, D.; Shen, J.; Lu, Y.;
  Zhou, D.; Le, Q.~V.; et~al. 2022.
\newblock Deepfusion: Lidar-camera deep fusion for multi-modal 3d object
  detection.
\newblock In \emph{CVPR}, 17182--17191.

\bibitem[{Lin et~al.(2022)Lin, Cheng, Zhong, Zhou, and Yang}]{lin2022dynamic}
Lin, Y.; Cheng, T.; Zhong, Q.; Zhou, W.; and Yang, H. 2022.
\newblock Dynamic spatial propagation network for depth completion.
\newblock In \emph{AAAI}.

\bibitem[{Lu et~al.(2020)Lu, Barnes, Anwar, and Zheng}]{2020FromLu}
Lu, K.; Barnes, N.; Anwar, S.; and Zheng, L. 2020.
\newblock From Depth What Can You See? Depth Completion via Auxiliary Image
  Reconstruction.
\newblock In \emph{CVPR}.

\bibitem[{Ma, Cavalheiro, and Karaman(2019)}]{ma2018self}
Ma, F.; Cavalheiro, G.~V.; and Karaman, S. 2019.
\newblock Self-supervised sparse-to-dense: Self-supervised depth completion
  from lidar and monocular camera.
\newblock In \emph{ICRA}.

\bibitem[{Mu et~al.(2022)Mu, Le, Yikai, Jian, Jin, and Jian}]{he_ra_depth}
Mu, H.; Le, H.; Yikai, B.; Jian, R.; Jin, X.; and Jian, Y. 2022.
\newblock RA-Depth: Resolution Adaptive Self-Supervised Monocular Depth
  Estimation.
\newblock In \emph{ECCV}.

\bibitem[{Park et~al.(2020)Park, Joo, Hu, Liu, and Kweon}]{park2020nonlocal}
Park, J.; Joo, K.; Hu, Z.; Liu, C.-K.; and Kweon, I.~S. 2020.
\newblock Non-Local Spatial Propagation Network for Depth Completion.
\newblock In \emph{ECCV}.

\bibitem[{Petrovai and Nedevschi(2022)}]{petrovai2022exploiting}
Petrovai, A.; and Nedevschi, S. 2022.
\newblock Exploiting Pseudo Labels in a Self-Supervised Learning Framework for
  Improved Monocular Depth Estimation.
\newblock In \emph{CVPR}, 1578--1588.

\bibitem[{Qiu et~al.(2019)Qiu, Cui, Zhang, Zhang, Liu, Zeng, and
  Pollefeys}]{Qiu_2019_CVPR}
Qiu, J.; Cui, Z.; Zhang, Y.; Zhang, X.; Liu, S.; Zeng, B.; and Pollefeys, M.
  2019.
\newblock DeepLiDAR: Deep Surface Normal Guided Depth Prediction for Outdoor
  Scene From Sparse LiDAR Data and Single Color Image.
\newblock In \emph{CVPR}.

\bibitem[{Qu, Liu, and Taylor(2021)}]{Qu_2021_ICCV}
Qu, C.; Liu, W.; and Taylor, C.~J. 2021.
\newblock Bayesian Deep Basis Fitting for Depth Completion With Uncertainty.
\newblock In \emph{ICCV}, 16147--16157.

\bibitem[{Ranftl et~al.(2020)Ranftl, Lasinger, Hafner, Schindler, and
  Koltun}]{ranftl2020towards}
Ranftl, R.; Lasinger, K.; Hafner, D.; Schindler, K.; and Koltun, V. 2020.
\newblock Towards robust monocular depth estimation: Mixing datasets for
  zero-shot cross-dataset transfer.
\newblock \emph{IEEE transactions on pattern analysis and machine
  intelligence}.

\bibitem[{Rho, Ha, and Kim(2022)}]{rho2022guideformer}
Rho, K.; Ha, J.; and Kim, Y. 2022.
\newblock GuideFormer: Transformers for Image Guided Depth Completion.
\newblock In \emph{CVPR}, 6250--6259.

\bibitem[{Roy and Todorovic(2016)}]{roy2016monocular}
Roy, A.; and Todorovic, S. 2016.
\newblock Monocular depth estimation using neural regression forest.
\newblock In \emph{CVPR}.

\bibitem[{Saxena et~al.(2005)Saxena, Chung, Ng et~al.}]{saxena2005learning}
Saxena, A.; Chung, S.~H.; Ng, A.~Y.; et~al. 2005.
\newblock Learning depth from single monocular images.
\newblock In \emph{NeurIPS}.

\bibitem[{Shivakumar et~al.(2019)Shivakumar, Nguyen, Miller, Chen, Kumar, and
  Taylor}]{shivakumar2019dfusenet}
Shivakumar, S.~S.; Nguyen, T.; Miller, I.~D.; Chen, S.~W.; Kumar, V.; and
  Taylor, C.~J. 2019.
\newblock Dfusenet: Deep fusion of rgb and sparse depth information for image
  guided dense depth completion.
\newblock In \emph{ITSC}.

\bibitem[{Shu et~al.(2020)Shu, Yu, Duan, and Yang}]{shu2020featdepth}
Shu, C.; Yu, K.; Duan, Z.; and Yang, K. 2020.
\newblock Feature-metric Loss for Self-supervised Learning of Depth and
  Egomotion.
\newblock In \emph{ECCV}.

\bibitem[{Silberman et~al.(2012)Silberman, Hoiem, Kohli, and
  Fergus}]{silberman2012indoor}
Silberman, N.; Hoiem, D.; Kohli, P.; and Fergus, R. 2012.
\newblock Indoor segmentation and support inference from rgbd images.
\newblock In \emph{ECCV}.

\bibitem[{Tang et~al.(2020)Tang, Tian, Feng, Li, and Tan}]{tang2020learning}
Tang, J.; Tian, F.-P.; Feng, W.; Li, J.; and Tan, P. 2020.
\newblock Learning guided convolutional network for depth completion.
\newblock \emph{IEEE Transactions on Image Processing}, 30: 1116--1129.

\bibitem[{Tiwari et~al.(2020)Tiwari, Ji, Tran, Zhuang, Anand, and
  Chandraker}]{tiwari2020pseudo}
Tiwari, L.; Ji, P.; Tran, Q.-H.; Zhuang, B.; Anand, S.; and Chandraker, M.
  2020.
\newblock Pseudo rgb-d for self-improving monocular slam and depth prediction.
\newblock In \emph{ECCV}, 437--455.

\bibitem[{Uhrig et~al.(2017)Uhrig, Schneider, Schneider, Franke, Brox, and
  Geiger}]{Uhrig2017THREEDV}
Uhrig, J.; Schneider, N.; Schneider, L.; Franke, U.; Brox, T.; and Geiger, A.
  2017.
\newblock Sparsity Invariant CNNs.
\newblock In \emph{3DV}.

\bibitem[{Van~Gansbeke et~al.(2019)Van~Gansbeke, Neven, De~Brabandere, and
  Van~Gool}]{vangansbeke2019}
Van~Gansbeke, W.; Neven, D.; De~Brabandere, B.; and Van~Gool, L. 2019.
\newblock Sparse and noisy lidar completion with rgb guidance and uncertainty.
\newblock In \emph{MVA}, 1--6.

\bibitem[{Wang et~al.(2021)Wang, Wang, Wang, Zhan, Wang, and Lu}]{wang2021can}
Wang, L.; Wang, Y.; Wang, L.; Zhan, Y.; Wang, Y.; and Lu, H. 2021.
\newblock Can Scale-Consistent Monocular Depth Be Learned in a Self-Supervised
  Scale-Invariant Manner?
\newblock In \emph{ICCV}, 12727--12736.

\bibitem[{Wang et~al.(2020)Wang, Zhang, Wang, Lin, and Lu}]{wang2020sdc}
Wang, L.; Zhang, J.; Wang, O.; Lin, Z.; and Lu, H. 2020.
\newblock Sdc-depth: Semantic divide-and-conquer network for monocular depth
  estimation.
\newblock In \emph{CVPR}, 541--550.

\bibitem[{Wong, Cicek, and Soatto(2021)}]{wong2021learning}
Wong, A.; Cicek, S.; and Soatto, S. 2021.
\newblock Learning topology from synthetic data for unsupervised depth
  completion.
\newblock \emph{IEEE Robotics and Automation Letters}, 6(2): 1495--1502.

\bibitem[{Wong et~al.(2021)Wong, Fei, Hong, and Soatto}]{wong2021adaptive}
Wong, A.; Fei, X.; Hong, B.-W.; and Soatto, S. 2021.
\newblock An Adaptive Framework for Learning Unsupervised Depth Completion.
\newblock \emph{IEEE Robotics and Automation Letters}, 6(2): 3120--3127.

\bibitem[{Wong et~al.(2020)Wong, Fei, Tsuei, and Soatto}]{wong2020unsupervised}
Wong, A.; Fei, X.; Tsuei, S.; and Soatto, S. 2020.
\newblock Unsupervised depth completion from visual inertial odometry.
\newblock \emph{IEEE Robotics and Automation Letters}, 5(2): 1899--1906.

\bibitem[{Wong and Soatto(2021)}]{wong2021unsupervised}
Wong, A.; and Soatto, S. 2021.
\newblock Unsupervised Depth Completion with Calibrated Backprojection Layers.
\newblock In \emph{ICCV}.

\bibitem[{Xian et~al.(2020)Xian, Zhang, Wang, Mai, Lin, and
  Cao}]{xian2020structure}
Xian, K.; Zhang, J.; Wang, O.; Mai, L.; Lin, Z.; and Cao, Z. 2020.
\newblock Structure-guided ranking loss for single image depth prediction.
\newblock In \emph{CVPR}, 611--620.

\bibitem[{Yan et~al.(2022{\natexlab{a}})Yan, Li, Wang, Zhang, Li, and
  Yang}]{yan2022multi}
Yan, Z.; Li, X.; Wang, K.; Zhang, Z.; Li, J.; and Yang, J. 2022{\natexlab{a}}.
\newblock Multi-modal masked pre-training for monocular panoramic depth
  completion.
\newblock \emph{arXiv preprint arXiv:2203.09855}.

\bibitem[{Yan et~al.(2022{\natexlab{b}})Yan, Wang, Li, Zhang, Li, Li, and
  Yang}]{yan2022learning}
Yan, Z.; Wang, K.; Li, X.; Zhang, Z.; Li, G.; Li, J.; and Yang, J.
  2022{\natexlab{b}}.
\newblock Learning Complementary Correlations for Depth Super-Resolution With
  Incomplete Data in Real World.
\newblock \emph{IEEE Transactions on Neural Networks and Learning Systems}.

\bibitem[{Yan et~al.(2022{\natexlab{c}})Yan, Wang, Li, Zhang, Li, and
  Yang}]{yan2021rignet}
Yan, Z.; Wang, K.; Li, X.; Zhang, Z.; Li, J.; and Yang, J. 2022{\natexlab{c}}.
\newblock RigNet: Repetitive image guided network for depth completion.
\newblock In \emph{ECCV}.

\bibitem[{Yang, Wong, and Soatto(2020)}]{2020Dense}
Yang, Y.; Wong, A.; and Soatto, S. 2020.
\newblock Dense Depth Posterior (DDP) From Single Image and Sparse Range.
\newblock In \emph{CVPR}.

\bibitem[{Zhang et~al.(2019)Zhang, Cui, Xu, Yan, Sebe, and
  Yang}]{zhang2019pattern}
Zhang, Z.; Cui, Z.; Xu, C.; Yan, Y.; Sebe, N.; and Yang, J. 2019.
\newblock Pattern-affinitive propagation across depth, surface normal and
  semantic segmentation.
\newblock In \emph{CVPR}.

\bibitem[{Zhao et~al.(2020)Zhao, Liu, Shu, and Liu}]{zhao2020towards}
Zhao, W.; Liu, S.; Shu, Y.; and Liu, Y.-J. 2020.
\newblock Towards better generalization: Joint depth-pose learning without
  posenet.
\newblock In \emph{CVPR}.

\bibitem[{Zhong et~al.(2016)Zhong, Wu, You, and Neumann}]{zhong2019deep}
Zhong, Y.; Wu, C.-Y.; You, S.; and Neumann, U. 2016.
\newblock Deep rgb-d canonical correlation analysis for sparse depth
  completion.
\newblock In \emph{NeurIPS}, volume~32.

\bibitem[{Zhou et~al.(2017)Zhou, Brown, Snavely, and
  Lowe}]{zhou2017unsupervised}
Zhou, T.; Brown, M.; Snavely, N.; and Lowe, D.~G. 2017.
\newblock Unsupervised learning of depth and ego-motion from video.
\newblock In \emph{CVPR}.

\bibitem[{Zhu et~al.(2022)Zhu, Dong, Li, Wu, Li, and Shi}]{zhu2021robust}
Zhu, Y.; Dong, W.; Li, L.; Wu, J.; Li, X.; and Shi, G. 2022.
\newblock Robust Depth Completion with Uncertainty-Driven Loss Functions.
\newblock In \emph{AAAI}.

\end{thebibliography}
\end{document}